\newcommand{\argmin}{\arg\!\min}
\begin{document}
%\mainmatter

\title{LogDet Rank Minimization with Application to Subspace Clustering}

\author{
Zhao Kang\and Chong Peng \and Jie Cheng\and Qiang Cheng}
\institute{Z. Kang\and C. Peng\and Q. Cheng   \at Computer Science Department, Southern Illinois University, Carbondale, IL, USA\\
\email{qcheng@cs.siu.edu   
\and J. Cheng  \at Department of Computer Science and Engineering, University of Hawaii at Hilo, Hilo, HI, USA
}}

\date{Received: date / Accepted: date}

\maketitle

\begin{abstract}
Low-rank matrix is desired in many machine learning and computer vision problems. Most of the recent studies use the nuclear norm as a convex surrogate of the rank operator. However, all singular values are simply added together by the nuclear norm, and thus the rank may not be well approximated in practical problems. In this paper, we propose to use a log-determinant (LogDet) function as a smooth and closer, though non-convex, approximation to rank for obtaining a low-rank representation in subspace clustering. Augmented Lagrange multipliers strategy is applied to iteratively optimize the LogDet-based non-convex objective function on potentially large-scale data. By making use of the angular information of principal directions of the resultant low-rank representation, an affinity graph matrix is constructed for spectral clustering. Experimental results on motion segmentation and face clustering data demonstrate that the proposed method often outperforms state-of-the-art subspace clustering algorithms.    
\keywords{Matrix rank approximation\and Subspace clustering\and Nuclear norm\and  Log-determinant\and Low-rank representation\and Angular information\and Segmentation}
\end{abstract}

%%%%%%%%%%%%%%%%%%%%%%%%%%%%%%%%%%%%%%%%%%%%%%%%%%
\section{Introduction}
Matrix rank minimizing~\cite{fazel2002matrix} is ubiquitous in machine learning, computer vision, control, signal processing and system identification. For instance, low-rank representation based subspace clustering \cite{liu2010robust,liu2013robust,vidal2014low} and matrix completion \cite{candes2009exact,hu2013fast} methods have achieved great success recently. Subspace clustering \cite{vidal2010tutorial} is one of the fundamental topics with numerous applications, e.g., image representation \cite{eldar2009robust,yang2008unsupervised}, face clustering \cite{elhamifar2013sparse,liu2013robust}, and motion segmentation \cite{rao2010motion,lauer2009spectral}. It is assumed that high-dimensional data is more likely a union of low-dimensional subspaces rather than one individual subspace. For example, different subspaces are needed to describe trajectories of different moving objects in a video sequence. Subspace clustering is an intrinsically difficult problem, since we need to simultaneously cluster all data points into multiple groups and find a low-dimensional subspace fitting each group of points. 

Subspace clustering has been an active research topic over the past decades. Four main categories of methods are proposed \cite{elhamifar2013sparse}: iterative, algebraic, statistical, and spectral clustering-based methods. The first three kinds of approaches are sensitive to initialization, noise and outliers; in addition, they are difficult to optimize \cite{elhamifar2013sparse}. Spectral clustering-based methods have achieved promising performance, where the key is to learn a good affinity matrix of data points. For instance, the algorithms of local subspace affinity (LSA) \cite{yan2006general}, locally linear manifold clustering (LLMC) \cite{goh2007segmenting}, and spectral local best-fit flats (SLBF) \cite{zhang2012hybrid}, use local information around each point to construct the affinity matrix, while spectral curvature clustering (SCC) \cite{chen2009spectral} method preserves the global structures of the whole data set in deriving the affinity matrix. Subsequently, K-means \cite{jing2007entropy} or Normalized Cuts (NCut) \cite{shi2000normalized,von2007tutorial} are applied to the affinity matrix to obtain clustering results. 

Recently, some spectral clustering based methods, such as sparse representation (SSC) \cite{elhamifar2013sparse}, low-rank representation (LRR) \cite{liu2013robust}, have been proposed to obtain state-of-the-art results in subspace clustering. SSC represents each data point as a sparse linear combination of the other points and solves an $l_1$-norm regularized minimization problem for sparsity.  SSC shows promising results if the subspaces are either independent or disjoint \cite{elhamifar2010clustering}. 

The basic idea of LRR is to learn a low-rank representation of data by capturing the global Euclidean structure of the whole data. In this scheme, each data point is represented as a linear combination of the examples in the data matrix itself, and a convex nuclear norm minimization is used as a surrogate of the rank function to obtain the desired low-rank representation. Though its optimization is well-studied and has a global optimum, its performance may be far from optimal in real applications because the nuclear norm might not be a good approximation to the rank function. Compared to the rank function to which all nonzero singular values have equal contributions, the nuclear norm treats those values differently by simply adding them together. As a result, the nuclear norm may be dominated by a few very large singular values and significantly deviated from the true value of the rank. Several papers have considered this problem of using the nuclear norm and designed methods to alleviate it by either thresholding or removing some of the singular values; for instance, singular value thresholding \cite{cai2010singular} and truncated nuclear norm \cite{hu2013fast} both considerably enhance the performance of matrix completion. 

In this paper, we propose to use a log-determinant (LogDet) function for rank approximation and study its minimization in subspace clustering. Different from the nuclear norm-based approaches which minimize the summation of all  singular values, our approach aims to minimize the rank by making the contribution to be much closer to one from a big singular value, while zero from a small singular value. In this way, we can get closer and more robust approximation to the rank function than the nuclear norm. Since the LogDet function is non-convex, we apply the method of augmented Lagrange multipliers (ALM) to solve the associated optimization for potentially large-scale applications, in which the subproblem for minimizing the LogDet function in each iteration has a closed-form solution. To demonstrate the effectiveness of our LogDet minimization\\ 
method, we apply it to subspace clustering. By employing a rather simple formulation based on the LogDet function, we obtain a low-rank representation for subspace clustering. Subsequently, we exploit the angular information of principal directions of such a representation to further enhance the separation ability of the affinity matrix.
In summary, our main contributions of this work include:
\begin{itemize}
\item{More accurate and robust rank approximation is used to obtain the low-rank  representation, which is able to capture the global structure of the dataset.}
\item{An iterative optimization algorithm is designed for minimizing this rank approximation-based objective function. Theoretical analysis shows that our algorithm converges to a stationary point. Specifically, the proposed optimization method is applied to subspace clustering. }
\item{Angular information of principal directions of the low-rank representation is employed to further exploit the intrinsic local geometrical structure relevant to the membership of data points. }
\item{Extensive experiments demonstrate the effectiveness of the proposed LogDet minimization method for rank approximation. Especially, when used for subspace clustering, our simple formulation shows favorable performance  compared to other state-of-the-art methods, although we do not explicitly account for outliers in our model. This demonstrates the robustness of our approach. }
\end{itemize}
The remainder of the paper is organized as follows: Section \ref{relatedwork} provides a brief review of LRR and SSC. In Section \ref{proposedmodel}, we present the proposed approximation and design an efficient optimization scheme. We give convergence analysis in Section \ref{convergence}. Experimental results are shown in Section \ref{experiment}. Finally, conclusions are drawn in Section \ref{conclusion}.

\section{Review of LRR and SCC}
\label{relatedwork}
In this section, we give a brief review of SSC and LRR.

Let $X=[x_1, x_2, ..., x_n]\in \mathbf{\mathcal{R}}^{d\times n}$ be a set of $d$-dimensional data points drawn from an unknown union of $k$ linear subspaces $S_1, S_2, ... , S_k$. The task of subspace clustering is to segment data points into $k$ subspaces.
 
LRR tries to seek the lowest rank representation among many possible linear combinations of the bases in a given dictionary, which typically is the data matrix itself. The problem can be formulated as:
\begin{equation}
\min_Z\hspace{.2cm} rank(Z) \hspace{.2cm}  s.t. \hspace{.2cm} X=XZ,
\end{equation}
where $Z=[z_1, z_2, ..., z_n]$ is the coefficient matrix with each $z_i$ being the representation of $x_i$. The above problem is NP-hard due to the combinatorial nature of the rank function.

The tightest convex relaxation of the rank function \cite{recht2010guaranteed} is the nuclear norm. For a matrix $D \in {\mathcal{R}}^{m \times n}$, its nuclear norm is defined as $\left\|D\right\|_*=\sum_{i=1}^{\min(m, n)}\sigma_i(D)$, where $\sigma_i(D)$ means the $i$-th singular value of $D$. Using this relaxation, LRR solves the following problem:
 \begin{equation}
\min_Z \left\|Z\right\|_* \hspace{.4cm}   s.t. \hspace{.4cm} X=XZ. 
\end{equation}
After obtaining $Z$, the affinity matrix $W$ is defined as
 \begin{equation}
W=|Z|+|Z^T|.
\end{equation}
Then the spectral clustering algorithm, Normalized Cuts \cite{shi2000normalized} is used to produce the final segmentation.

SSC aims to find a sparse representation of $X$ by solving the following convex optimization problem:
\begin{equation}
\begin{split}
&\min_{Z, E, S} \|Z\|_1+\frac{\alpha}{2}\|E\|_F^2+\gamma\|S\|_1, \\
 &s.t. \hspace{.2cm}X=XZ+E+S,\quad diag(Z)=0,
\end{split}
\end{equation}
where $\|S\|_1=\sum_{ij} |S_{ij}|$, $S$ is a sparse matrix containing the gross error, and $\|E\|_F^2=\sum_i\sum_j E_{ij}^2$, $E$ is a matrix of fitting residuals.
After obtaining $Z$, subsequent procedures are similar to LRR.

\section{LogDet Rank Approximation and Its Minimization Algorithm}
\label{proposedmodel}

A function $f: \mathbf{\mathcal{R}}^{n}\rightarrow[-\infty, \infty]$ is absolutely symmetric if $f(x)$ is invariant under arbitrary permutations and sign changes of the elements of $x$. Based on this function $f(x)$, we have the following theorem \cite{lewis1995convex}.
\begin{theorem}
Function $F: \mathbf{\mathcal{R}}^{n_1\times n_2}\rightarrow \mathbf{\mathcal{R}}$ is unitarily invariant if $F(X)=f(\sigma(X))=f \circ \sigma(X)$, where $X\in\mathbf{\mathcal{R}}^{n_1\times n_2} $ whose singular value decomposition is\\
 $X=U diag(\lbrace\sigma_i\rbrace_{1\leq i \leq n}) V^T$, $\sigma(X):  \mathbf{\mathcal{R}}^{n_1\times n_2}\rightarrow  \mathbf{\mathcal{R}}^{n}$ are singular values of $X$, and $n=min(n_1, n_2)$. And the gradient of $F(X)$ at $X$ is
\begin{equation}
\frac{\partial F(X)}{\partial X}=U diag(\theta) V^T,
\label{theorem}
\end{equation}
where $\theta=\frac{\partial f(y)}{\partial y}|_{y=\sigma (X)}$.
\end{theorem}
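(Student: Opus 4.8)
The first assertion (unitary invariance) is immediate: an orthogonal change of coordinates on either side leaves the singular values unchanged, so $F(PXQ^{T})=f(\sigma(PXQ^{T}))=f(\sigma(X))=F(X)$ for all orthogonal $P,Q$. The content of the statement is therefore the gradient identity, and the plan is to verify directly that $G:=U\,diag(\theta)\,V^{T}$ satisfies the first-order expansion $F(X+H)=F(X)+\langle G,H\rangle+o(\|H\|_{F})$, which is exactly what it means for $G$ to equal $\nabla F(X)$.

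First I would record two structural facts about $\theta=\nabla f(\sigma(X))$ forced by absolute symmetry of $f$, which in passing show that $G$ does not depend on the particular SVD chosen for $X$. Invariance of $f$ under swapping two coordinates gives $\theta_{i}=\theta_{j}$ whenever $\sigma_{i}(X)=\sigma_{j}(X)$; invariance under flipping the sign of a zero coordinate gives $\theta_{i}=0$ whenever $\sigma_{i}(X)=0$. Hence $\theta$ is constant on each cluster of equal singular values and vanishes on the block of zero singular values, which is precisely the freedom present in the factors $U$ and $V$.

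For the expansion, write $D=diag(\sigma(X))$ and, for a perturbation $H$, set $M=U^{T}HV$, so that $U^{T}(X+H)V=D+M$ has the same singular values as $X+H$ and $\|M\|_{F}=\|H\|_{F}$. Differentiability of $f$ at $\sigma(X)$ together with Mirsky's inequality $\|\sigma(X+H)-\sigma(X)\|_{2}\le\|H\|_{F}$ gives
\[
F(X+H)=F(X)+\langle\theta,\,\sigma(X+H)-\sigma(X)\rangle+o(\|H\|_{F}),
\]
so it is enough to show $\langle\theta,\sigma(D+M)\rangle=\langle\theta,\sigma(D)\rangle+\sum_{i}\theta_{i}M_{ii}+o(\|M\|_{F})$, since $\sum_{i}\theta_{i}M_{ii}=\langle U\,diag(\theta)\,V^{T},H\rangle=\langle G,H\rangle$. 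I would prove this one cluster at a time. On the zero cluster $\theta\equiv 0$, so it contributes nothing — this is where absolute symmetry earns its keep, because $\sigma$ itself is not differentiable along those directions. On a cluster with index set $B$ and common singular value $s>0$, apply standard analytic perturbation theory for a group of eigenvalues to $(D+M)(D+M)^{T}=DD^{T}+(DM^{T}+MD^{T})+MM^{T}$: the sum of the $|B|$ eigenvalues near $s^{2}$ equals $|B|s^{2}+2s\sum_{i\in B}M_{ii}+o(\|M\|_{F})$; taking square roots and using $\sqrt{s^{2}+\eta}=s+\eta/(2s)+o(\eta)$ yields $\sum_{i\in B}\sigma_{i}(D+M)=|B|s+\sum_{i\in B}M_{ii}+o(\|M\|_{F})$, and multiplying by the common value of $\theta_{i}$ on $B$ gives the blockwise identity. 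Summing over clusters completes the expansion, so $\nabla F(X)=U\,diag(\theta)\,V^{T}$.

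The main obstacle is exactly the treatment of repeated and zero singular values: at such $X$ the map $\sigma$ is not differentiable, so one cannot naively differentiate through the composition $f\circ\sigma$; instead the symmetry of $f$ must be used to make $\theta$ blockwise constant and zero on the kernel block, which annihilates the nonsmooth directions. A slicker but less self-contained alternative would be to invoke Lewis's conjugacy formula $(f\circ\sigma)^{*}=f^{*}\circ\sigma$ and the resulting subdifferential formula $\partial(f\circ\sigma)(X)=\{U\,diag(\mu)\,V^{T}:\mu\in\partial f(\sigma(X))\}$, which reduces to a single element — the asserted gradient — exactly when $f$ is differentiable at $\sigma(X)$; I would prefer the direct perturbation argument to keep the paper self-contained.
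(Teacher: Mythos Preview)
Your argument is correct. The paper, however, does not actually prove this statement: immediately after stating the theorem it simply remarks that the gradient formula ``can be obtained directly from Theorem~3.1 of \cite{lewis1995convex}'' and moves on. So there is no proof in the paper to compare against in the usual sense.

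What you have done is supply a self-contained verification of the first-order expansion via analytic perturbation of eigenvalue clusters of $(D+M)(D+M)^{T}$, using the absolute symmetry of $f$ to force $\theta$ to be constant on blocks of equal singular values and to vanish on the zero block, which is exactly what is needed to kill the nonsmooth directions of $\sigma$. The ``square-root'' step is slightly compressed---one has to pair the expansion $\sqrt{s^{2}+\eta}=s+\eta/(2s)+O(\eta^{2})$ with a Weyl-type bound $\lambda_{i}-s^{2}=O(\|M\|_{F})$ on each perturbed eigenvalue before summing---but this is routine and the conclusion stands. Your ``slicker alternative'' via Lewis's conjugacy formula and the subdifferential identity $\partial(f\circ\sigma)(X)=\{U\,\mathrm{diag}(\mu)\,V^{T}:\mu\in\partial f(\sigma(X))\}$ is precisely the content of the theorem the paper cites, so in effect you have both reproduced the paper's route (as a one-line alternative) and given an independent elementary proof that the paper omits.
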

Equation (\ref{theorem}) can be obtained directly from Theorem 3.1 of \cite{lewis1995convex}.

In this work, we utilize unitarily invariant function LogDet to achieve a closer, though not convex, rank relaxation than the nuclear norm. We apply the method of ALM for LogDet rank approximation associated minimization. To explain our method, we specifically consider using LogDet as a rank surrograte in subspace clustering. We first obtain a low-rank representation of high-dimensional data based on the LogDet optimization. Then we construct an affinity graph matrix for spectral clustering by using the angular information of principal directions of the low-rank representation.
\subsection{LogDet rank minimization}

We use $\log det ( I +  Z^T Z)$ as a surrogate of the rank function of $Z$. It is obvious that  
$\log det(I+Z^T Z) = \sum_{i=1}^{n} \log (1+ \sigma_i^2 (Z)) $. Because it can be easily verified that 
$\log(1+ \sigma_i^2(Z) ) \le \sigma_i(Z)$ for any $\sigma_i(Z) \ge 0$, we always have 
$\log det (I + Z^T Z) \le \left\|Z\right\|_*$; especially, if there are large nonzero singular values, the LogDet function will be much smaller than the nuclear norm since $\log(1+ \sigma_i^2(Z) ) \ll \sigma_i(Z)$ for a large $\sigma_i(Z) > 1$. 
It is noted that for small nonzero singular values, their contribution to the LogDet function will be significantly reduced compared to the nuclear norm. Because small nonzero singular values are often regarded as being from noise in the data, the LogDet function reduces noise effect more compared to the nuclear norm.  

It is worthwhile to note that a similar function \\
$\log det(X+\delta I)$ was proposed in \cite{fazel2003log} to approximate rank and iterative linearization was used to find a local minimum. However, $\delta$ is a very small constant (e.g., $10^{-6}$), which leads to biased approximation for small singular values. 

This LogDet function is differentiable with respect to the singular values by Theorem 1, and even though it is non-convex, its minimization is rather simple by using our optimization method. 
To explain its minimization, we consider its specific application to subspace clustering. By employing the above LogDet function, we simply formulate the subspace clustering into the following unconstrained nonconvex minimization problem:  
\begin{equation}
\min_Z \hspace{.1cm} \log\hspace{.1cm} det(I+Z^TZ)+\rho \left\|X-XZ\right\|_F^2,
\label{problemd}
\end{equation}
where $I\in \mathbf{\mathcal{R}}^{n\times n}$ is the identity matrix. The first term of (\ref{problemd}) is to minimize the rank of $Z$, while the second is a relaxation of $X = XZ$, which is referred to as the self-expressiveness of $X$ with $Z$ representing the similarity between data points. Because the LogDet function is not convex in $Z$, we resort to ALM technique to solve (\ref{problemd}), by re-writing (\ref{problemd}) as follows:
\begin{equation}
\label{newproblem}
\min_Z \hspace{.1cm} \log\hspace{.1cm} det(I+Z^TZ)+\rho \left\|X-XW\right\|_F^2 \hspace{.1 cm} s.t. \hspace{.1cm} Z=W.
\end{equation}

We turn to minimizing the following  augmented Lagrangian function:
\begin{equation}
\label{newform}
\begin{split}
L(Y, Z, W, \beta)=\log\hspace{.1cm} det(I\!+Z^TZ)\!+\rho \left\|X-XW\right\|_F^2\\\!+\frac{\beta}{2}\left\|Z-W\right\|_F^2\!+Tr(Y^T(Z-W)),
\end{split}
\end{equation} 		 
where $\beta>0$ is a penalty parameter and $Y$ is the Lagrangian dual variable. With a sufficiently large $\beta$, the objective function converges to objective function in (\ref{problemd}). This can be solved by updating $Z$, $W$, and $Y$ alternatively while fixing the other variables. Specifically, assume at the $k$th iteration we have obtained $Z^k, W^k$, and $Y^k$, then for the $(k+1)$th iteration, the optimization problem (\ref{newform}) can be updated via the following four steps.

\begin{algorithm}[t]
\caption{\textbf: LogDet Rank Minimization}
\textbf{Input:} data matrix $X$, parameters $\rho>0, \gamma>1, \text{and } \beta_0>0$. \\
\textbf{Initialize:} $Z=I\in \mathbf{\mathcal{R}}^{n\times n}$, $Y=0$.\\
\textbf{Repeat}

\begin{algorithmic}[1]
\STATE Update $W$ as:
\begin{equation*}
W^{k+1}=(\beta_k I+2\rho X^TX)^{-1}(2\rho X^TX+Y^k+\beta_k Z^{k+1}).
\end{equation*}
\STATE Solve $Z$ using (\ref{variablez}) and (\ref{svdf}).
\STATE Update the augmented multiplier Y and the augmented Lagrange multiplier $\beta$:
\begin{align*}
Y^{k+1}&=Y^{k}+\beta_k(Z^{k+1}-W^{k+1}),\\
\beta_{k+1}&=\gamma \beta_k.
\end{align*}

\end{algorithmic}
\textbf{Until} stopping criterion is satisfied.\\
\textbf{Return} $Z^{*}=Z^{k+1}$.
\end{algorithm}

Step 1: Computing $W^{k+1}$. Fix $Z^{k}$ and $Y^k$ and then calculate $W^{k+1}$:
\begin{equation}
\begin{split}
W^{k+1}=\argmin_W \rho\left\|X-XW\right\|_F^2+\\
\frac{\beta_k}{2}\left\|Z^{k}-(W-\frac{1}{\beta_k}Y^k)\right\|_F^2,
\end{split}
\end{equation}  
which has a closed-form solution, 
\begin{equation}
W^{k+1}=(\beta_k I+2\rho X^TX)^{-1}(2\rho X^TX+Y^k+\beta_k Z^{k}).
\label{closeform}
\end{equation} 

Step 2: Computing $Z^{k+1}$. Fix $W^{k+1}$ and $Y^k$, and minimize $L(Y^k, Z, W^{k+1}, \beta_k)$ as follows:
\begin{equation}
\label{variablez}
\begin{split}
Z^{k+1}=& \argmin_Z \hspace{.3cm} L(Y^k, Z, W^{k+1}, \beta_k)\\
            =& \argmin_Z \hspace{.3cm}\log\hspace{.1cm} det(I+Z^TZ)+\\
&\frac{\beta_k}{2}\left\|Z-(W^{k+1}-\frac{1}{\beta_k}Y^k)\right\|_F^2.
\end{split}
\end{equation} 
This can be converted to a scalar minimization problem due to the following theorem. As we notice, this can also be rewritten as s special case of the problem in a recent work \cite{lu2015generalized}.

\begin{theorem}
For unitarily invariant function $F(Z)=f\circ\sigma(Z)$, assuming SVD of $A\in\mathcal{R}^{m\times n}$ is $A = U \Sigma_A V^T$, $\Sigma_A=diag(\{\sigma_{i, A}\}_{i=1}^{ \min(m, n)})$, the optimal solution to the following problem 
\begin{equation}
\min_{Z}F(Z)+\frac{\beta}{2}\|Z-A\|_F^{2}
\label{eq:Zobj}
\end{equation}
 is $Z^* = U \Sigma_Z^* V^T$, with $\Sigma_Z^* = diag( \{  \sigma_i^*   \}_{i=1}^{ \min(m, n) } )$ obtained by solving scalar minimization problems
\begin{equation}
\label{scalar}
 \sigma_i^*  = \arg\min_{\sigma_i} f(\sigma_i) + \frac{\beta}{2} (\sigma_i - \sigma_{i, A})^2, \hspace{.1cm} i=1, \cdots, \min(m, n).
\end{equation} 
 
\end{theorem}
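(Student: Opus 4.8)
The plan is to reduce the matrix problem \eqref{eq:Zobj} to a minimization over the vector of singular values, with von Neumann's trace inequality doing the essential work. First I would expand the quadratic penalty,
\begin{equation*}
F(Z)+\frac{\beta}{2}\|Z-A\|_F^2 = f(\sigma(Z)) + \frac{\beta}{2}\|Z\|_F^2 - \beta\,\mathrm{Tr}(Z^TA) + \frac{\beta}{2}\|A\|_F^2 ,
\end{equation*}
and observe that $F(Z)=f(\sigma(Z))$ and $\|Z\|_F^2=\sum_i\sigma_i(Z)^2$ depend on $Z$ only through its singular values, so the sole term that couples $Z$ with the fixed matrix $A$ is $\mathrm{Tr}(Z^TA)$.

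Next I would invoke von Neumann's trace inequality: writing singular values in nonincreasing order, $\mathrm{Tr}(Z^TA)\le\sum_i\sigma_i(Z)\,\sigma_{i,A}$ for every $Z\in\mathcal{R}^{m\times n}$, with equality exactly when $Z$ shares a simultaneous SVD with $A$, i.e.\ $Z=U\,\mathrm{diag}(\sigma(Z))\,V^T$ using the same $U,V$ as in $A=U\Sigma_AV^T$. Substituting this bound and completing the square yields
\begin{equation*}
F(Z)+\frac{\beta}{2}\|Z-A\|_F^2 \ \ge\ f(\sigma(Z)) + \frac{\beta}{2}\sum_i\big(\sigma_i(Z)-\sigma_{i,A}\big)^2 ,
\end{equation*}
and the right-hand side is attained by taking $Z=U\,\mathrm{diag}(\sigma(Z))\,V^T$. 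Hence the optimal value of \eqref{eq:Zobj} equals the minimum of this right-hand side over nonnegative (nonincreasing) vectors $\sigma$, and any optimal such vector, inserted as $\Sigma_Z^*$, produces a minimizer $Z^*=U\Sigma_Z^*V^T$.

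Then comes the bookkeeping. Since $f$ is absolutely symmetric, $f(\sigma)$ is permutation-invariant, while the rearrangement inequality shows that $\sum_i(\sigma_i-\sigma_{i,A})^2$ is smallest, among all vectors sharing a given multiset of entries, for the ordering matching $\sigma_A$; so one may restrict to sorted $\sigma$ without loss, which guarantees that the resulting $Z^*$ is a genuine SVD. Finally, when $f$ is separable --- the relevant case here, $f(\sigma)=\sum_i\log(1+\sigma_i^2)$ for the LogDet surrogate --- the vector minimization decouples into the $n$ independent scalar problems \eqref{scalar}, whose componentwise minimizers assemble into a global minimizer of the vector problem; equivalently one can cite the generalized singular value thresholding result of \cite{lu2015generalized}.

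I expect the main obstacle to be the equality case of von Neumann's inequality: establishing not merely the bound $\mathrm{Tr}(Z^TA)\le\sum_i\sigma_i(Z)\sigma_{i,A}$ but that it is tight precisely for matrices that diagonalize simultaneously with $A$, together with a clean handling of the non-uniqueness of the SVD when $A$ (or $Z^*$) has repeated or vanishing singular values. Once that is secured, the reduction to \eqref{scalar} and the rearrangement argument are routine.
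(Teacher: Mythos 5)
Your proposal is correct and follows essentially the same route as the paper's proof: both expand the Frobenius penalty, apply von Neumann's trace inequality to the cross term $\langle Z, A\rangle$ to obtain the lower bound $f(\sigma(Z))+\frac{\beta}{2}\sum_i(\sigma_i(Z)-\sigma_{i,A})^2$, note that equality is attained when $Z$ shares the SVD factors $U,V$ of $A$, and then decouple into the scalar problems using the separability of $f$. The paper packages the argument via the substitution $X=U^TZV$ rather than expanding the trace directly, but this is a cosmetic difference; your explicit attention to the equality case and to the ordering of singular values is if anything slightly more careful than the paper's treatment.
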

\begin{proof}
Let $A=U\Sigma_{A}V^T$ be SVD of $A$, then $\Sigma_{A}=U^TAV$. Denoting $X=U^TZV$ which has exactly the same signular values as $Z$, i.e., $\Sigma_X=\Sigma_Z$, we have
\begin{flalign}
%\label{eq:X_proof}
  &F(Z)+\frac{\beta}{2}\|Z-A\|_F^{2}&\label{lower}\\
&= F(X)+\frac{\beta}{2}\|X-\Sigma_A\|_{F}^{2},&\label{unitary}\\
&= F(\Sigma_X)+\frac{\beta}{2}\|X-\Sigma_A\|_{F}^{2},&\label{inva}\\
&= F(\Sigma_X)+\frac{\beta}{2}\left(\|X\|_F^2+\|\Sigma_A\|_F^2-2\left\langle X,\Sigma_A\right\rangle \right),&\label{von}\\
&\geq  F(\Sigma_X)\!+\!\frac{\beta}{2}\left(\|\Sigma_X\|_F^2\!+\!\|\Sigma_A\|_F^2\!-\!2\left\langle \Sigma_X,\Sigma_A\right\rangle \right),&\label{eight}\\
&= F(\Sigma_X)+\frac{\beta}{2}\|\Sigma_X-\Sigma_A\|_F^{2},&\label{nine}\\
&= F(\Sigma_Z)+\frac{\beta}{2}\|\Sigma_Z-\Sigma_A\|_F^{2},&\label{ten}\\
&= \sum_i\left[f(\sigma_i)+\frac{\beta}{2}(\sigma_i-\sigma_{i,A})^2\right],&\\
&\ge \sum_i f(\sigma_i^*) + \frac{\beta}{2} (\sigma_i^* - \sigma_{i, A})^2. 
\end{flalign}
In the above, (\ref{unitary}) holds because the Frobenius norm is unitary invariant; (\ref{inva}) holds because $F(Z)$ is unitary invariant; (\ref{von}) is true by von Neumann's inequality; and (\ref{ten}) holds as $\Sigma_X = \Sigma_Z$. The inequality between (\ref{unitary}) and (\ref{nine}) can also be obtained by the Hoffman-Wielandt inequality. Therefore, (\ref{ten}) is a lower bound of (\ref{lower}), where $\Sigma_Z^*$ is obtained by minimizing (\ref{ten}). Note that the equality in (\ref{eight}) is attained if $X = \Sigma_X$. 
Because $\Sigma_Z = \Sigma_X = X = U^T Z V$, the SVD of $Z$ is $Z = U \Sigma_Z V^T$, which is the minimizer of problem (\ref{eq:Zobj}). 
Hence the proof is completed.  

\end{proof}

The first-order optimality condition is that the gradient of (\ref{scalar}) with respect to each singular value should vanish.  
Thus for subproblem (\ref{variablez}), we have 
\begin{equation}
\label{svdf}
\frac{2\sigma_i}{1+\sigma_{i}^{2}}+\beta_k (\sigma_{i}-\Sigma_{i}^{k})=0,\hspace{0.1cm} s.t. \hspace{0.1cm}\sigma_i \ge  0, \hspace{.1cm} for\hspace{0.1cm} i=1, ..., n,\\
\end{equation} 
where SVD of $W^{k+1}-\frac{1}{\beta_k}Y^k$ is $Udiag(\{\Sigma_{i}^k\}_{i=1}^n)V^T$.
The above equation is cubic and gives three roots. In addition, we need to enforce the nonnegativity of $\sigma_i$. 
It is easily seen that there exists at least one nonnegative root. And there is a unique minimizer $\sigma_i^*\in[0, \Sigma_i^k)$ if $\beta_k>1/4$. Finally, we obtain the update of $Z$ variable with $Z^{k+1}=U diag(\sigma_{1}^{k*}, ..., \sigma_{n}^{*}) V^T $.

Step 3: Computing $Y^{k+1}$. Fix $Z^{k+1}$ and $W^{k+1}$, and then we calculate $Y^{k+1}$ as follows:
\begin{equation}
Y^{k+1}=Y^{k}+\beta_k(Z^{k+1}-W^{k+1}).
\label{multi}
\end{equation}

Step 4: Updating $\beta_{k+1}$ as $\beta_{k+1}=\gamma \beta_k$. 
 The complete procedure is summarized in Algorithm 1. 

Problem (\ref{problemd}) is nonconvex. It is difficult to give a rigorous mathematical argument for convergence to a (local) optimum. We will provide a theoretical proof that our algorithm converges to an accumulation point and this accumulation point is a stationary point. Our empirical experiments confirm the convergence of the proposed method on the benchmark datasets. The experimental results are promising, despite that the solution obtained by the proposed optimization method  may be a local optimum. 

\subsection{Affinity graph matrix construction}
Now we will construct an affinity matrix $W$ for subspace clustering. Optimal $Z^*$ may not accurately describe the relationship between samples if the data is severely corrupted. Therefore, in general, it is not a good idea to construct $W$ by directly using $Z^*$. In the spirit of \cite{liu2013robust,lauer2009spectral}, we construct an affinity matrix in the following way. 

Assuming the skinny SVD of $Z^*$ is $U^*\Sigma^*(V^*)^T$, we define $M=U^*(\Sigma^*)^{1/2}$ and $N=(\Sigma^*)^{1/2}(V^*)^T$. Based on the weighted eigen-vector matrix $M$ or $N$, we construct an  affinity matrix $W$ as follows:
\begin{equation}
 W_{ij}=(\frac{m_i^Tm_j}{\left\|m_i\right\|_2\left\|m_j\right\|_2})^{2\alpha}\hspace{0.2cm} or \hspace{.2cm}  W_{ij}=(\frac{n_i^Tn_j}{\left\|n_i\right\|_2\left\|n_j\right\|_2})^{2\alpha}\hspace{0.2cm},
\label{affinity}
\end{equation}
where $m_i$ ($n_i$) and $m_j$ ($n_j$) represent the $i$-th and $j$-th columns (rows) of $M$ ($N$), respectively, and parameter $\alpha \in {\mathcal{N}}$ tunes the sharpness of the affinity between two points, with $\alpha>1$ helping separate the clusters. When $\alpha$ increases, while the between-cluster separability can be increased, the intra-cluster cohesiveness would nevertheless be degraded. Thus, a suitable $\alpha$ needs to balance within-cluster cohesiveness and between-cluster separability. In this paper, we set $\alpha$ to be 2. Then we have the same post-processing as LRR\footnote{For LRR, we use equation (12) in \cite{liu2013robust} rather than (3) to construct $W$. We also confirmed with an author of \cite{liu2013robust}, the power 2 of equation (12) is a typo, it should be 4.}. As $U^*$ or $V^*$ spans the principal directions of $Z^*$, we employ the angle information, or powered correlation coefficients of the examples, because their lengths may be affected significantly by the noise or outliers in the data.  

Now using the resultant affinity matrix, we can apply spectral clustering algorithm to do segmentation. In this paper, we simply perform NCuts \cite{shi2000normalized} on $W$. The proposed subspace clustering procedure is summarized in Algorithm 2.

\begin{algorithm}[t]
\caption{\textbf: The SCLD Algorithm}
\textbf{Input:} data matrix $X$, number of subspaces $k$, parameters $\rho>0$, $\gamma>1$, \text{and }$\beta_0>0$. 
\begin{algorithmic}[1]
\STATE Obtain $Z^*$ from Algorithm 1.
\STATE Compute the skinny SVD $Z^*=U^* \Sigma^* (V^*)^T$.
\STATE Calculate $M=U^*(\Sigma^*)^{1/2}$ or $N=(\Sigma^*)^{1/2}(V^*)^T$.
\STATE Construct the affinity graph matrix $W$ by (\ref{affinity}).
\STATE Apply $W$ to perform NCuts.

\end{algorithmic}
\end{algorithm} 

\section{Convergence Analysis}
\label{convergence}
In this section, we give the convergence analysis for Algorithm 1. We will show that our optimization algorithm attains at least one stationary point of problem (\ref{newproblem}).
We first rewrite the objective function of (\ref{newproblem}) as
\begin{flalign}
G(Z,W)&=F(Z)\!+\!\rho\|X-XW\|_{F}^2\hspace{.1cm}s.t.\hspace{.1cm} Z=W, \label{first}\\
H(Z,W,Y)&=G(Z,W)+\left\langle Z-W,Y\right\rangle,  \label{second}\\
L(Z,W,Y,\beta)&=H(Z,W,Y)+\frac{\beta}{2}\|Z-W\|_{F}^2. \label{eq:AugLag}
\end{flalign}

\begin{lemma}
The sequence $\{Y_{k}\}$ is bounded.
\end{lemma}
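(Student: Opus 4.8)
The plan is to identify $Y^{k+1}$ with the negative of the gradient of the LogDet term evaluated at $Z^{k+1}$, and then bound that gradient uniformly in $k$, regardless of what $Z^{k+1}$ happens to be.

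First I would use the first-order optimality condition of the $Z$-subproblem (Step~2). Since $Z^{k+1}$ minimizes $\log\det(I+Z^TZ)+\frac{\beta_k}{2}\|Z-(W^{k+1}-\frac{1}{\beta_k}Y^k)\|_F^2$ and $F(Z)=\log\det(I+Z^TZ)$ is differentiable by Theorem~1, the gradient of this objective vanishes at $Z^{k+1}$:
\begin{equation*}
\frac{\partial F(Z^{k+1})}{\partial Z}+\beta_k\bigl(Z^{k+1}-W^{k+1}\bigr)+Y^k=0.
\end{equation*}
Combining this with the multiplier update $Y^{k+1}=Y^{k}+\beta_k(Z^{k+1}-W^{k+1})$ collapses the relation to the clean identity $Y^{k+1}=-\,\partial F(Z^{k+1})/\partial Z$.

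Next I would invoke Theorem~1 with $f(x)=\sum_i\log(1+x_i^2)$: writing the SVD of $Z^{k+1}$ as $U\,\mathrm{diag}(\sigma)\,V^T$, we obtain $\partial F(Z^{k+1})/\partial Z=U\,\mathrm{diag}(\theta)\,V^T$ with $\theta_i=2\sigma_i/(1+\sigma_i^2)$. Since $1+\sigma_i^2\ge 2\sigma_i$ for every $\sigma_i\ge 0$, each $|\theta_i|\le 1$, so $\|Y^{k+1}\|_F=\|\mathrm{diag}(\theta)\|_F=\|\theta\|_2\le\sqrt{n}$, a bound that does not depend on $k$. Since $Y^{0}=0$ is trivially bounded, this gives $\sup_k\|Y_k\|_F\le\sqrt{n}$, i.e., $\{Y_k\}$ is bounded.

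The main obstacle is justifying that the matrix gradient genuinely vanishes at $Z^{k+1}$, given that the $Z$-subproblem is nonconvex and, by Theorem~2, its minimizer is produced by constrained scalar problems carrying $\sigma_i\ge 0$; in principle stationarity could fail at an active constraint. I would rule this out: if some $\sigma_i^{*}=0$, the scalar objective $f(\sigma_i)+\frac{\beta_k}{2}(\sigma_i-\Sigma_i^{k})^2$ has derivative $0+\beta_k(0-\Sigma_i^{k})=-\beta_k\Sigma_i^{k}$ there, which is compatible with a minimum only if $\Sigma_i^{k}=0$, in which case the derivative is exactly $0$ at $\sigma_i=0$ as well. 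Hence the gradient of the subproblem objective truly vanishes at $Z^{k+1}$, the identity $Y^{k+1}=-\,\partial F(Z^{k+1})/\partial Z$ holds unconditionally, and the bound follows. (If one instead reads Algorithm~1 as computing $W^{k+1}$ from the already-updated $Z^{k+1}$, the same $Z$-step optimality condition, and hence the same bound, still applies.)
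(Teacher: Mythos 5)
Your proof is correct and takes essentially the same route as the paper's: combine the first-order optimality condition of the $Z$-subproblem with the multiplier update to obtain $Y^{k+1}=-\,\partial F(Z)/\partial Z|_{Z^{k+1}}$, then bound that gradient uniformly via $0\le 2\sigma_i/(1+\sigma_i^2)\le 1$. Your extra check that stationarity is not lost at an active constraint $\sigma_i=0$ (and the explicit $\sqrt{n}$ bound) is a careful refinement the paper omits but does not change the argument.
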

\begin{proof}
To minimize $Z$ at step $k+1$, the optimal $Z_{k+1}$ needs to satisfy the first-order optimality condition
\begin{equation}
\begin{split}
& \nabla_Z L\left(Z,W_{k+1},Y_{k},\beta_{k}\right)|_{Z_{k+1}}\\
=&\nabla_Z F\left(Z\right)|_{Z_{k+1}}+\beta_{k}\left(Z_{k+1}+\frac{1}{\beta_k}Y_{k}-W_{k+1}\right)=0.
\end{split}
\end{equation}
Note that the updating rule for $Y$ is 
\begin{equation}
Y_{k+1}=Y_{k}+\beta_{k}\left(Z_{k+1}-W_{k+1}\right),
\end{equation}
thus $\nabla_Z F\left(Z\right)|_{Z_{k+1}}+Y_{k+1}=0$.
We know from (\ref{theorem}) that
\begin{equation}
\begin{split}
 &\nabla_Z F\left(Z\right)|_{Z_{k+1}}\\
=& U diag\left(\frac{2\sigma_1}{1+\sigma_{1}^2},..., \frac{2\sigma_n}{1+\sigma_{n}^2}\right)V^T,
\end{split}
\end{equation}
and $0\leq\frac{2\sigma_i}{1+\sigma_{i}^2}\leq 1$, so $\nabla_Z F\left(Z\right)|_{Z_{k+1}}$ is bounded. Then it is seen that $Y_{k+1}$, i.e., $\{Y_k\}$ is bounded.
\end{proof}

\begin{lemma}
\label{lemma:ZW_bounded}
$\{W_k\}$ and $\{Z_k\}$ are bounded if $\sum \frac{\beta_{k+1}}{\beta_k^2}<\infty$ and $\sum \frac{1}{\beta_k}<\infty$.
\end{lemma}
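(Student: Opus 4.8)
The plan is to proceed in two stages. First I would show that the augmented Lagrangian values $L(Z_k,W_k,Y_k,\beta_k)$ stay uniformly bounded above along the iterations; then I would recover boundedness of $\{Z_k\}$ and $\{W_k\}$ by inspecting the individual nonnegative terms (after completing a square) that make up $L$ in (\ref{eq:AugLag}).

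For the first stage I would use the two exact minimization steps of Algorithm~1. Since $W_{k+1}=\argmin_W L(Z_k,W,Y_k,\beta_k)$ and $Z_{k+1}=\argmin_Z L(Z,W_{k+1},Y_k,\beta_k)$, we have $L(Z_{k+1},W_{k+1},Y_k,\beta_k)\le L(Z_k,W_k,Y_k,\beta_k)$. Then, using only (\ref{eq:AugLag}) together with the updates $Y_{k+1}=Y_k+\beta_k(Z_{k+1}-W_{k+1})$ and $\beta_{k+1}=\gamma\beta_k$ and the substitution $Z_{k+1}-W_{k+1}=\tfrac{1}{\beta_k}(Y_{k+1}-Y_k)$, a short computation gives
\begin{equation*}
L(Z_{k+1},W_{k+1},Y_{k+1},\beta_{k+1})=L(Z_{k+1},W_{k+1},Y_k,\beta_k)+\frac{\beta_k+\beta_{k+1}}{2\beta_k^2}\|Y_{k+1}-Y_k\|_F^2 ,
\end{equation*}
hence $L(Z_{k+1},W_{k+1},Y_{k+1},\beta_{k+1})\le L(Z_k,W_k,Y_k,\beta_k)+\tfrac12\big(\tfrac{1}{\beta_k}+\tfrac{\beta_{k+1}}{\beta_k^2}\big)\|Y_{k+1}-Y_k\|_F^2$. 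By the preceding lemma $\{Y_k\}$ is bounded, so $\|Y_{k+1}-Y_k\|_F^2\le c$ for some constant $c$; summing this inequality from the (finite) starting value and invoking the hypotheses $\sum\tfrac{1}{\beta_k}<\infty$ and $\sum\tfrac{\beta_{k+1}}{\beta_k^2}<\infty$ then yields a finite upper bound $L(Z_k,W_k,Y_k,\beta_k)\le L^\ast$ valid for every $k$.

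For the second stage I would complete the square in (\ref{eq:AugLag}),
\begin{equation*}
L(Z_k,W_k,Y_k,\beta_k)=F(Z_k)+\rho\|X-XW_k\|_F^2+\frac{\beta_k}{2}\Big\|Z_k-W_k+\tfrac{1}{\beta_k}Y_k\Big\|_F^2-\frac{1}{2\beta_k}\|Y_k\|_F^2 ,
\end{equation*}
where $F(Z_k)=\sum_i\log(1+\sigma_i^2(Z_k))\ge0$, the residual term is $\ge0$, and the penalty term is $\ge0$. Since $\beta_k\ge\beta_0$ and $\{Y_k\}$ is bounded, $\tfrac{1}{2\beta_k}\|Y_k\|_F^2$ is bounded, so the bound $L^\ast$ from the first stage forces each of these three nonnegative terms to be uniformly bounded. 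Boundedness of $\sum_i\log(1+\sigma_i^2(Z_k))$ makes every $\sigma_i(Z_k)$, hence $\|Z_k\|_F^2=\sum_i\sigma_i^2(Z_k)$, bounded, so $\{Z_k\}$ is bounded. Boundedness of the penalty term together with $\beta_k\ge\beta_0$ makes $\|Z_k-W_k+\tfrac{1}{\beta_k}Y_k\|_F$ bounded, and then the triangle inequality $\|W_k\|_F\le\|Z_k\|_F+\tfrac{1}{\beta_0}\|Y_k\|_F+\|Z_k-W_k+\tfrac{1}{\beta_k}Y_k\|_F$ shows $\{W_k\}$ is bounded.

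I expect the bookkeeping of the first stage to be the main obstacle: one must respect the exact order of the $W$- and $Z$-updates and carefully account for how the $Y$- and $\beta$-updates change $L$, so that the two summability hypotheses enter precisely where they are needed. A secondary subtlety is that $\{W_k\}$ cannot be bounded directly from $\rho\|X-XW_k\|_F^2$ being bounded, since $X$ need not have full column rank; this is exactly why the completed-square penalty term is used in the last step instead.
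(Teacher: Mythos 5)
Your proposal is correct and follows essentially the same route as the paper: the same exact-minimization descent chain, the same identity quantifying the increase of $L$ caused by the $Y$- and $\beta$-updates (with the two summability hypotheses absorbing $\sum\frac{\beta_{k+1}+\beta_k}{2\beta_k^2}\|Y_{k+1}-Y_k\|_F^2$), and the same completed-square decomposition of $L$ into nonnegative terms to extract boundedness of $\{Z_k\}$ and then $\{W_k\}$. The only cosmetic difference is that the paper recovers $W_{k+1}$ directly from $W_{k+1}=Z_{k+1}-\frac{1}{\beta_k}(Y_{k+1}-Y_k)$ rather than from the penalty term, but both arguments are equivalent.
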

\begin{proof}
\begin{equation}
\begin{split}
& L\left(Z_{k},W_{k},Y_{k},\beta_{k}\right)\\
=&L\left(Z_{k},W_{k},Y_{k-1},\beta_{k-1}\right)+\frac{\beta_k-\beta_{k-1}}{2}\|Z_k-W_k\|_F^2+\\
&Tr((Y_k-Y_{k-1})(Z_k-W_k))\\
=& L\left(Z_{k},W_{k},Y_{k-1},\beta_{k-1}\right)+\frac{\beta_k+\beta_{k-1}}{2\beta_{k-1}^2}\|Y_k-Y_{k-1}\|_F^2.
\end{split}
\end{equation}

Thus,
\begin{equation}
\begin{split}
& L\left(Z_{k+1},W_{k+1},Y_{k},\beta_{k}\right)\\
\leq & L\left(Z_{k},W_{k+1},Y_{k},\beta_{k}\right),\\
\leq & L\left(Z_{k},W_{k},Y_{k},\beta_{k}\right),\\
\leq & L\left(Z_{k},W_{k},Y_{k-1},\beta_{k-1}\right)+\frac{\beta_k+\beta_{k-1}}{2\beta_{k-1}^2}\|Y_k-Y_{k-1}\|_F^2,\\
\leq &...\\
\leq & L\left(Z_{1},W_{1},Y_{0},\beta_{0}\right)+\sum_{i=1}^{k}\frac{\beta_i+\beta_{i-1}}{2\beta_{i-1}^2}\|Y_i-Y_{i-1}\|_F^2.\\
\end{split}
\end{equation}
Since the second term in above inequality is finite, \\
$L\left(Z_{k+1},W_{k+1},Y_{k},\beta_{k}\right)$ is bounded.
We can rewrite\\
 $L\left(Z_{k+1},W_{k+1},Y_{k},\beta_{k}\right)$ as 
\begin{equation}
\begin{split}
& L(Z_{k+1},W_{k+1},Y_{k},\beta_{k})+\frac{1}{2\beta_k} ||Y_k||_F^2\\ %=L(Z,W,Y)+\frac{\beta}{2}\|Z-W\|_{F}^2\\
=& F(Z_{k+1})+\rho\|X-XW_{k+1}\|_F^2+\\
&\frac{\beta_k}{2}\|Z_{k+1}-W_{k+1}+\frac{1}{\beta_k}Y_k\|_F^2.
\label{bound}
\end{split}
\end{equation}
Because $L(Z_{k+1},W_{k+1},Y_{k},\beta_{k})$ and $\frac{1}{\beta_k} ||Y_k||_F^2$ are bounded and each term on the right hand side of the equation (\ref{bound}) is nonnegative, each term will be bounded.
$F(Z_{k+1})=\sum_{i} \log (1+\sigma_{i}^2(Z_{k+1}))$ being bounded implies that all singular values of $Z_{k+1}$ are bounded and $Z_{k+1}$ is bounded.
Since $\frac{1}{\beta_k} (Y_{k+1} - Y_k) = Z_{k+1} - W_{k+1}$, clearly  we have bounded $W_{k}$. Therefore $\{W_k\}$ and $\{Z_k\}$ are bounded.
\end{proof}

\begin{theorem}
$\{Y_k,W_k,Z_k\}$ has at least one accumulation point $\{Y^*,W^*,Z^*\}$, and $\{ W^*, Z^* \}$ is a stationary point of optimization problem (\ref{newproblem}) with the assumption that $\lim\limits_{k\rightarrow\infty}\beta_{k-1}(Z_{k}-Z_{k-1})\rightarrow 0$.
\end{theorem}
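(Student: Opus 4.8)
The plan is to extract a convergent subsequence using the boundedness already proved, and then to take limits in the exact first-order optimality conditions that hold at each iteration; the resulting limiting relations will be precisely the stationarity (KKT) conditions of (\ref{newproblem}).

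First I would invoke Lemma 1 and Lemma~\ref{lemma:ZW_bounded}, which give that $\{Y_k\}$, $\{W_k\}$, $\{Z_k\}$ are bounded (note that the updates $\beta_{k+1}=\gamma\beta_k$ with $\gamma>1$, $\beta_0>0$ give $\beta_k=\gamma^k\beta_0\to\infty$ and satisfy the summability hypotheses of Lemma~\ref{lemma:ZW_bounded}). Since the triples lie in a fixed finite-dimensional Euclidean space, Bolzano--Weierstrass produces a subsequence $\{Y_{k_j},W_{k_j},Z_{k_j}\}$ converging to some $\{Y^*,W^*,Z^*\}$, the required accumulation point. Next I would record three identities valid at every iteration: (i) from the $Z$-subproblem (\ref{variablez}), exactly as in the proof of Lemma 1, the first-order condition together with the $Y$-update gives $\nabla_Z F(Z_{k+1})+Y_{k+1}=0$; (ii) from the closed-form $W$-update, its first-order condition combined with the $Y$-update gives, after simplification, $-2\rho X^T(X-XW_{k+1})-Y_{k+1}=-\beta_k(Z_{k+1}-Z_k)$; (iii) the $Y$-update rewrites as $Z_{k+1}-W_{k+1}=\tfrac1{\beta_k}(Y_{k+1}-Y_k)$.

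Then I would pass to the limit along $\{k_j\}$. Since $\{Y_k\}$ is bounded and $\beta_k\to\infty$, identity (iii) forces $Z_k-W_k\to 0$, hence $Z^*=W^*$ (feasibility). Because $F(Z)=\log\det(I+Z^TZ)$ is a smooth function of $Z$ (indeed $\nabla_Z F(Z)=2Z(I+Z^TZ)^{-1}$), $\nabla_Z F$ is continuous, so evaluating identity (i) at index $k+1=k_j$ and letting $j\to\infty$ yields $\nabla_Z F(Z^*)+Y^*=0$. Evaluating identity (ii) at index $k+1=k_j$, its right-hand side is $-\beta_{k_j-1}(Z_{k_j}-Z_{k_j-1})$, which tends to $0$ by the standing hypothesis $\beta_{k-1}(Z_k-Z_{k-1})\to 0$; hence $-2\rho X^T(X-XW^*)-Y^*=0$. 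The three limiting relations $Z^*=W^*$, $\nabla_Z F(Z^*)+Y^*=0$, $-2\rho X^T(X-XW^*)-Y^*=0$ are precisely the stationarity conditions $\nabla_Z H=0$, $\nabla_W H=0$, $Z=W$ of problem (\ref{newproblem}), with $H$ as in (\ref{second}) and multiplier $Y^*$; therefore $\{W^*,Z^*\}$ is a stationary point of (\ref{newproblem}), which completes the argument.

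The step I expect to be the crux is the limit passage in the $W$-optimality identity: the augmented-Lagrangian coupling injects the term $\beta_k(Z_{k+1}-Z_k)$ whose prefactor $\beta_k$ diverges, so it can only be controlled under the hypothesis $\beta_{k-1}(Z_k-Z_{k-1})\to 0$ --- this is exactly where that assumption is consumed, and since it is imposed rather than derived it is the genuinely delicate part of the statement. A secondary technical point is the continuity of the spectral gradient $\nabla_Z F$, but this is harmless here because $\log\det(I+Z^TZ)$ is $C^\infty$ in $Z$, so nothing degenerate occurs at zero or repeated singular values; the remaining limit passages (feasibility and the $Z$-identity) are routine once boundedness is in hand.
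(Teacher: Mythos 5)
Your proposal is correct and follows essentially the same route as the paper: boundedness plus Bolzano--Weierstrass for the accumulation point, the identity $Z_k-W_k=\tfrac{1}{\beta_{k-1}}(Y_k-Y_{k-1})\to 0$ for feasibility, the $Z$-optimality condition $\nabla_Z F(Z_k)+Y_k=0$ passed to the limit, and the $W$-update identity $2\rho X^TX(W_k-I)=Y_k+\beta_{k-1}(Z_{k-1}-Z_k)$ combined with the standing assumption $\beta_{k-1}(Z_k-Z_{k-1})\to 0$ to obtain the remaining KKT condition. The only cosmetic differences are that you work explicitly along a subsequence where the paper assumes WLOG that the whole sequence converges, and you make the continuity of $\nabla_Z F$ explicit via the closed form $2Z(I+Z^TZ)^{-1}$.
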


\begin{proof}
$\{Y_k,W_k,Z_k\}$ is a bounded sequence, hence by the Bolzano-Weierstrass theorem, there must be at least one accumulation point, which is denoted by $\{Y^*,W^*,Z^*\}$.
Without loss of generality, we assume that 
$\{  Y_k, W_k, Z_k \}$ itself converges to 
$\{ Y^*, W^*, Z^* \}$. Next, we prove that this accumulation point is a stationary point of problem (\ref{first}).
As $Y_{k}=Y_{k-1}+\beta_{k-1} (Z_{k}-W_{k})$, we have $Z_{k}-W_{k}=\frac{1}{\beta_{k-1}}(Y_{k}-Y_{k-1})$. Because $\beta_{k-1}\rightarrow \infty$ and $\{Y_{k}\}$ is bounded, we get $Z_{k}-W_{k}\rightarrow 0$, i.e., $Z^*=W^*$. By first-order optimality condition and the definition of $Z_{k}$, we have 
$\nabla_Z F(Z)|_{Z_{k}} + Y_{k-1} + \beta_{k-1} (Z_{k} - W_{k})  = \nabla_Z F\left(Z\right)|_{Z_{k}}+Y_{k}=0$. Let $k\rightarrow\infty$, we get $\nabla_Z F\left(Z\right)|_{Z^*}+Y^*=0$. At the $k$th step, $W_{k}$ satisfies $(2\rho X^TX+\beta_{k-1}I)W_{k}=2\rho X^TX+\beta_{k-1}Z_{k-1}+Y_{k-1}$, i.e., $2\rho X^TX(W_{k}-I)=\beta_{k-1}Z_{k-1}-\beta_{k-1}W_{k}+Y_{k-1}=\beta_{k-1}(Z_{k}-W_{k})+\beta_{k-1}(Z_{k-1}-Z_{k})+Y_{k-1}=\beta_{k-1}(Z_{k-1}-Z_{k})+Y_{k}$. With the assumption that $\beta_{k-1}(Z_{k}-Z_{k-1})\rightarrow 0$ \cite{nie2014new}, we get $2\rho X^TX(W^*-I)=Y^*$.

Now we can see that $\{Y^*, W^*, Z^*\}$ satisfies the KKT conditions of $L(W, Z, Y )$ and thus $\{  W^*, Z^* \}$  is a stationary point of (\ref{newproblem}).
\end{proof}

\begin{table*}[!htbp]
\small
\setlength{\abovecaptionskip}{0pt}
\setlength{\belowcaptionskip}{10pt}
\centering
\caption{Parameter settings of different algorithms.}
\label{parameters}
\begin{tabular}{|c|c|c|c|c|}
\hline
\multirow{2}{*}{Method} & \multicolumn{2}{c|}{Face clustering} & \multicolumn{2}{c|}{Motion segmentation} \\
\cline{2-5 }
 & Scenario 1 & Scenario 2 &\multicolumn{2}{c|}{} \\
\hline
  LRR & \multicolumn{2}{c|}{$ \lambda = 0.18 $} & \multicolumn{2}{c|}{$ \lambda = 4 $} \\
\hline
  LSA & \multicolumn{2}{c|}{$K=3, d=5$} &\multicolumn{2}{c|}{ $K=8, d=5$}    \\
\hline  
SSC & ${\lambda _e} = {{8} \mathord{\left/
 {\vphantom {{800} {{\mu _e}}}} \right.
 \kern-\nulldelimiterspace} {{\mu _e}}}$ & ${\lambda _e} = {{20} \mathord{\left/
 {\vphantom {{800} {{\mu _e}}}} \right.
 \kern-\nulldelimiterspace} {{\mu _e}}}$ & \multicolumn{2}{c|}{${\lambda _z} = {{800} \mathord{\left/
 {\vphantom {{800} {{\mu _z}}}} \right.
 \kern-\nulldelimiterspace} {{\mu _z}}}$} \\
\hline
  LRSC & $\tau  = 0.4,\alpha  = 0.045$ &  $\tau  = 0.045,\alpha  = 0.045$  & \multicolumn{2}{c|}{$\tau  = 420,\alpha  = 3000\,or\,\alpha  = 5000$ } \\
\hline
  SCLD & $\rho  = 0.08$ &  $\rho  = 0.03$  & \multicolumn{2}{c|}{$\rho  = 55$ } \\
\hline

\end{tabular}
\end{table*}

\section{Experiments and Analysis}
In this section, we conduct experiments on the subspace clustering task with both synthetic and real data.
\label{experiment}
\subsection{Experiments with Synthetic Data}
We construct 5 independent subspaces whose bases $\{U_i\}_{i=1}^5$ are generated by a random rotation matrix $R$ through $U_{i+1}=RU_i$, $1\leq i\leq4$, where $U_1\in \mathcal{R}^{100\times4}$ is a random orthogonal matrix \cite{liu2010robust}. We sample 20 data vectors from each subspace by $X_j=U_j T_j$, $1\leq j\leq5$, where $T_j$ is a $4\times 20$ iid $\mathcal{N}(0,1)$ matrix. Some data vectors are randomly chosen to corrupt; for example, for a data vector $x$, it is corrupted by adding Gaussian noise with zero mean and variance $0.2\|x\|$. We then use SCLD to segment the data into 5 clusters. Subspace clustering error rate defined as 
$
\small
\frac{\text{\# of misclassified points}}{\text{total \# of points}}
$
 is used to assess the performance. We report the clustering error rate (averaged from 30 trials) with different corruption levels in Figure \ref{syn}. Without any corruption, SCLD can cluster all data points correctly.
\begin{figure}
\centering
    \includegraphics[width=.48\textwidth]{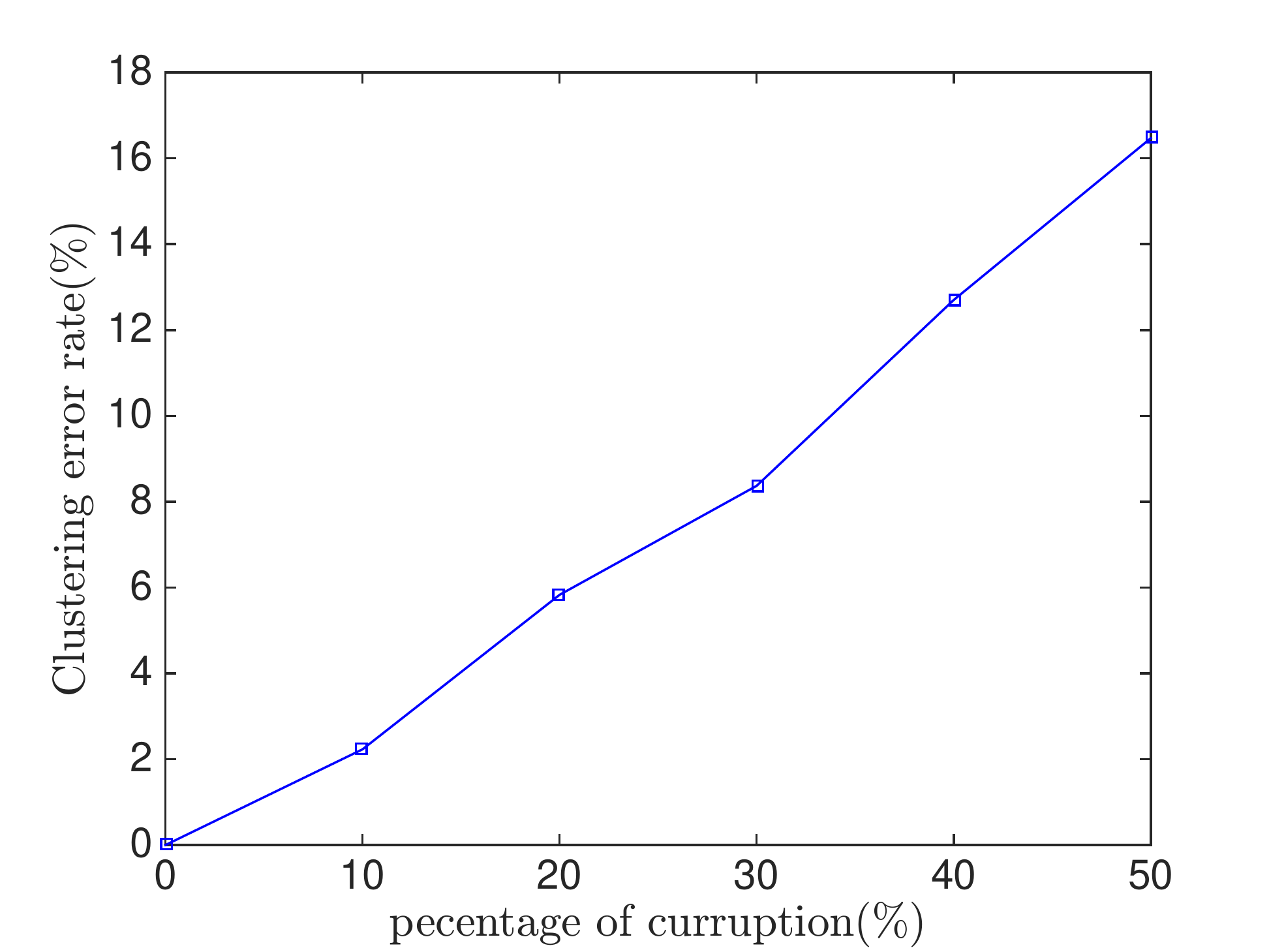}
\caption{The clustering error rate with different percentage of corruption on synthetic data. The parameter $\rho$ is tuned to obtain the best performance.}
\label{syn}
\end{figure}

\subsection{Experiments with Real Data}

In this section, we evaluate the effectiveness and robustness of SCLD on benchmark datasets, Extended Yale B (EYaleB) \cite{georghiades2001few,lee2005acquiring} and Hopkins 155 \cite{tron2007benchmark}. We compare the proposed method SCLD with several state-of-the-art subspace clustering algorithms: LRR \cite{liu2013robust}, SSC \cite{elhamifar2013sparse}, LRSC \cite{favaro2011closed,vidal2014low}, and local subspace affinity (LSA) \cite{yan2006general}. For these methods, we use the parameters given by the respective authors. For our method, we also tune $\rho$ to obtain the best performance. Generally, $\rho$ should be relatively large if the data are slightly corrupted. $\beta$ and $\gamma$ have little influence on the clustering results, so we just set  $\beta_0=0.3$ to ensure the unique of minimizer and use $\gamma=1.1$ empirically. Other parameters are shown in Table \ref{parameters}. The experiments are conducted on Window 7 with 16 GM memory and Intel Core i5-2300 CPU. 
\begin{figure*}
\centering
    \includegraphics[width=\textwidth]{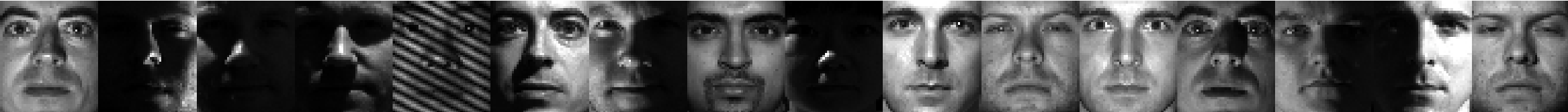}
\caption{Sample images from the Extended Yale B database.}
\label{faceimg}
\end{figure*}

\subsubsection{Face Clustering}
Face clustering is to cluster a set of face images from multiple individuals in a hope to reveal the identity of these individuals.  
EYaleB Database includes 2414 frontal images of 38 individuals. For each individual, the images are taken under 64 lighting conditions and can be described by a low-dimensional subspace \cite{basri2003lambertian}. The images are resized to 48$\times$42 pixels and each vectorized image is regarded as a data point. Fig. \ref{faceimg} shows some example images from the database.

\paragraph{First Experiment Scenario}
\begin{table}[ht]
\caption{Clustering error rate on the first 10 classes of EYaleB.}
%\centering
\begin{tabular}{c c c c c c }
\hline
Method & LRR&SSC&LSA&LRSC&SCLD  \\
\hline
error rate (\%)&20.94&35&59.52&35.78&\textbf{3.59} \\ %[lex]
\hline
\end{tabular}
\label{table:firstyale}
\end{table}
\noindent As done in \cite{liu2010robust}, we test the algorithms on the first 10 classes of  EYaleB, which consists of 640 frontal face images. More than half of the images are corrupted by shadow and noise. We use this heavily corrupted data to test the effectiveness of our method. As shown in Table \ref{table:firstyale}, SCLD significantly enhances the performance. Specifically, it improves the clustering accuracy by at least $17\%$ when compared to the other algorithms. Since the only difference between our approach and LRR is rank approximation, this improvement is due to LogDet. 

\paragraph{Second Experiment Scenario}
\begin{table}[ht]
\caption{Clustering error rates (\%) on EYaleB.}
%\centering
\begin{tabular}{cccccc}
\hline\noalign{\smallskip}
Method & LRR&SSC&LSA&LRSC& SCLD  \\
\noalign{\smallskip}\hline\noalign{\smallskip}
2 Subjects &&&&&\\
 Mean&2.54&\textbf{1.86}&32.80&5.32&2.79 \\
Median&0.78&\textbf{0.00}&47.66&4.69&0.78 \\
\noalign{\smallskip}\hline\noalign{\smallskip}
3 Subjects &&&&&\\
 Mean&4.21&\textbf{3.10}&52.29&8.47&3.72 \\
Median&2.60&\textbf{1.04}&50.00&7.81&1.56 \\
\noalign{\smallskip}\hline\noalign{\smallskip}
5 Subjects &&&&&\\
Mean&6.90&\textbf{4.31}&58.02&12.24&4.83 \\
Median&5.63&\textbf{2.50}&56.87&11.25&\textbf{2.50} \\
\noalign{\smallskip}\hline\noalign{\smallskip}
8 Subjects &&&&&\\
 Mean&14.34&5.85&59.19&23.72&\textbf{5.45} \\
Median&10.06&4.49&58.59&28.03&\textbf{3.52} \\
\noalign{\smallskip}\hline\noalign{\smallskip}
10 Subjects &&&&&\\
Mean&22.92&10.94&60.42&30.36&\textbf{6.25} \\
Median&23.59&5.63&57.50&28.75&\textbf{4.84} \\
\noalign{\smallskip}\hline
\end{tabular}
\label{table:yale2nd}
\end{table}
For a fair comparison, we have followed the experimental setup of \cite{elhamifar2013sparse}. We divide the 38 subjects into four groups: subjects 1 to 10, 11 to 20, 21 to 30, and 31 to 38. We consider all choices of $n\in\lbrace2, 3, 5, 8, 10\rbrace$ subjects for the first three groups. For the last group, we consider all choices of $n\in\lbrace2, 3, 5, 8\rbrace$. We implement our subspace clustering algorithm on each set of $n$ subjects. For all experiments, the stopping criterion for $Z$ is triggered by a relative difference of $10^{-5}$ between two successive iterations, or by a maximum of 100 iterations.    

The results are presented in Table \ref{table:yale2nd}. For other methods, we cited the results from Table 5 of paper \cite{elhamifar2013sparse}. SCLD consistently has low clustering error rates and is more stable than the other methods whose error rates increase drastically as the number of subjects increases to 8 and 10. As shown in Figure \ref{faceimg}, there are many sparse within-sample outliers in the face images, e.g, shadows. Although LRR uses a regularization term to count for corruptions, the regularization term does not appear to be well suited to EYaleB. LSA has inferior performance possibly because it does not explicitly exploit the low-rank structure of the data.   

\paragraph{Third Experiment Scenario}
In this section, we compare SCLD with other algorithms with RPCA \cite{candes2011robust} as a preprocessing step. In practice, we do not know the clustering of the data beforehand and hence we apply RPCA to the collection of all data points for each trial prior to clustering. As shown in Table \ref{table:yale3rd}, SCLD is still superior to other methods though they apply RPCA to deal with sparse outlying entries. Compared to Table \ref{table:yale2nd}, only the clustering error rates of LRSC reduced in some cases. We can conclude that applying RPCA to all data points simultaneously is not effective to improve clustering performance. This is due to the fact that RPCA seeks a common low-rank subspace, which will decrease the principal angles between subspaces and decrease the distance between data points in different subjects \cite{elhamifar2013sparse}.
\begin{table}[ht]
\caption{Clustering error rates (\%) on EYaleB after applying RPCA simultaneously to all the data in each trial.}
%\centering
\begin{tabular}{cccccc}
\hline\noalign{\smallskip}
Method & LRR&SSC&LSA&LRSC& SCLD  \\
\noalign{\smallskip}\hline\noalign{\smallskip}
2 Subjects &&&&&\\
 Mean&5.72&\textbf{2.09}&32.53&5.67&2.79 \\
Median&3.91&\textbf{0.78}&47.66&4.69&\textbf{0.78} \\
\noalign{\smallskip}\hline\noalign{\smallskip}
3 Subjects &&&&&\\
 Mean&10.01&3.77&53.02&8.72&\textbf{3.72} \\
Median&9.38&2.60&51.04&8.33&\textbf{1.56} \\
\noalign{\smallskip}\hline\noalign{\smallskip}
5 Subjects &&&&&\\
Mean&15.33&6.79&58.76&10.99&\textbf{4.83} \\
Median&15.94&5.31&56.87&10.94&\textbf{2.50} \\
\noalign{\smallskip}\hline\noalign{\smallskip}
8 Subjects &&&&&\\
 Mean&28.67&10.28&62.32&16.14&\textbf{5.45} \\
Median&31.05&9.57&62.50&14.65&\textbf{3.52} \\
\noalign{\smallskip}\hline\noalign{\smallskip}
10 Subjects &&&&&\\
Mean&32.55&11.46&62.40&21.82&\textbf{6.25} \\
Median&30.00&11.09&62.50&25.00&\textbf{4.84} \\
\noalign{\smallskip}\hline
\end{tabular}
\label{table:yale3rd}
\end{table}

\subsubsection{Motion Segmentation}
\begin{figure*}[!htbp]
%\centering
\includegraphics[width=0.24\textwidth]{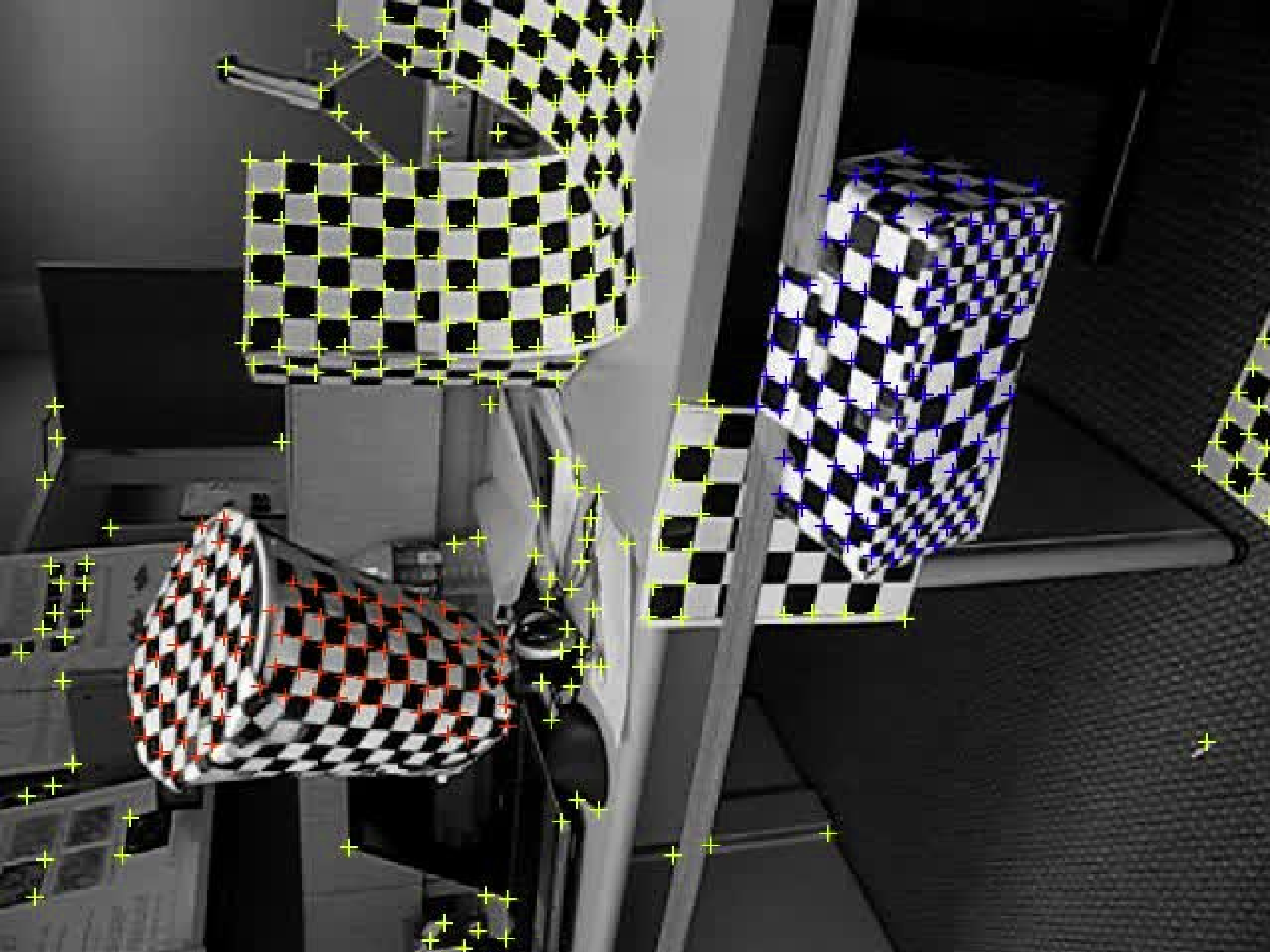}
\includegraphics[width=0.24\textwidth]{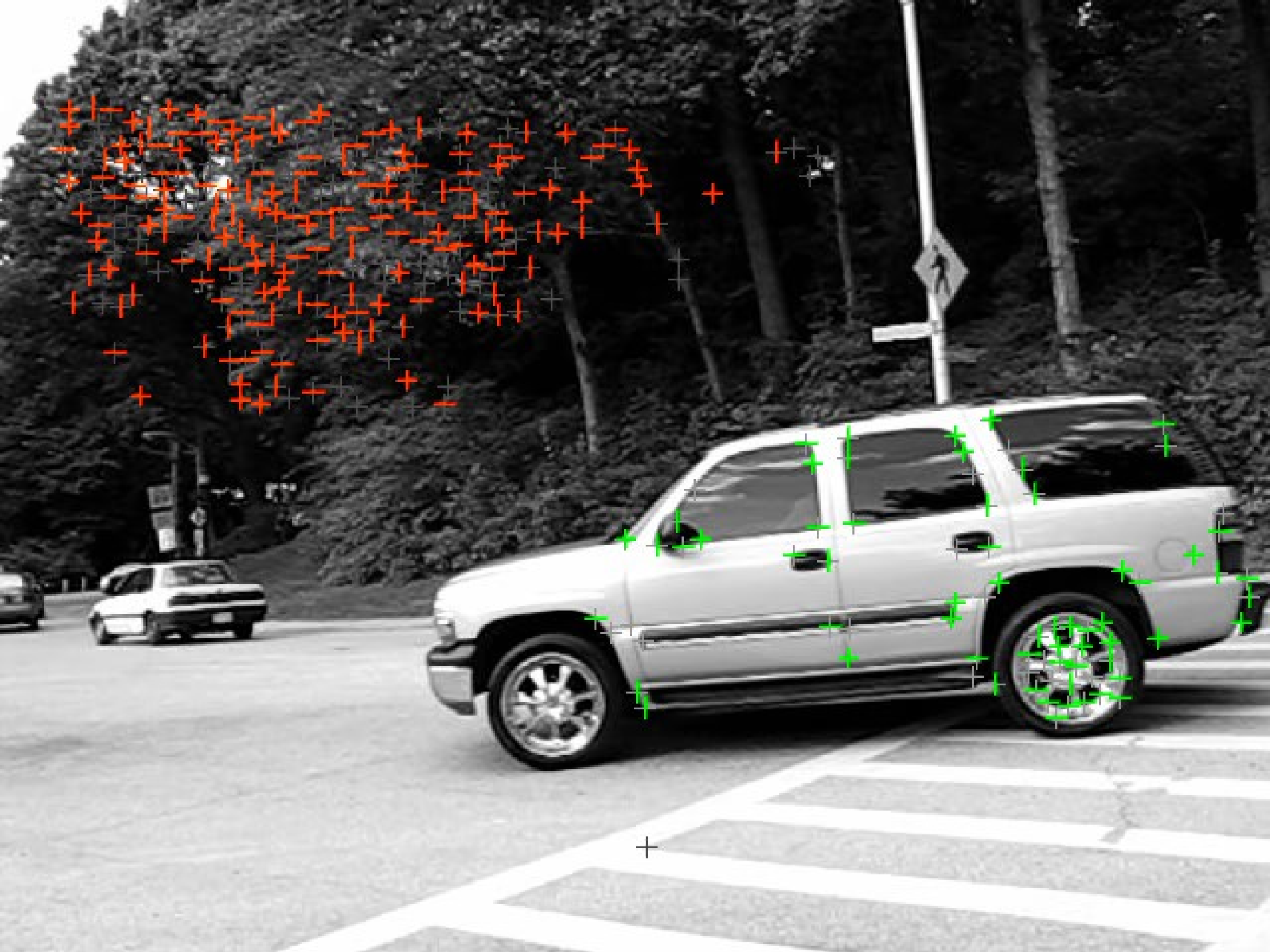}
\includegraphics[width=0.24\textwidth]{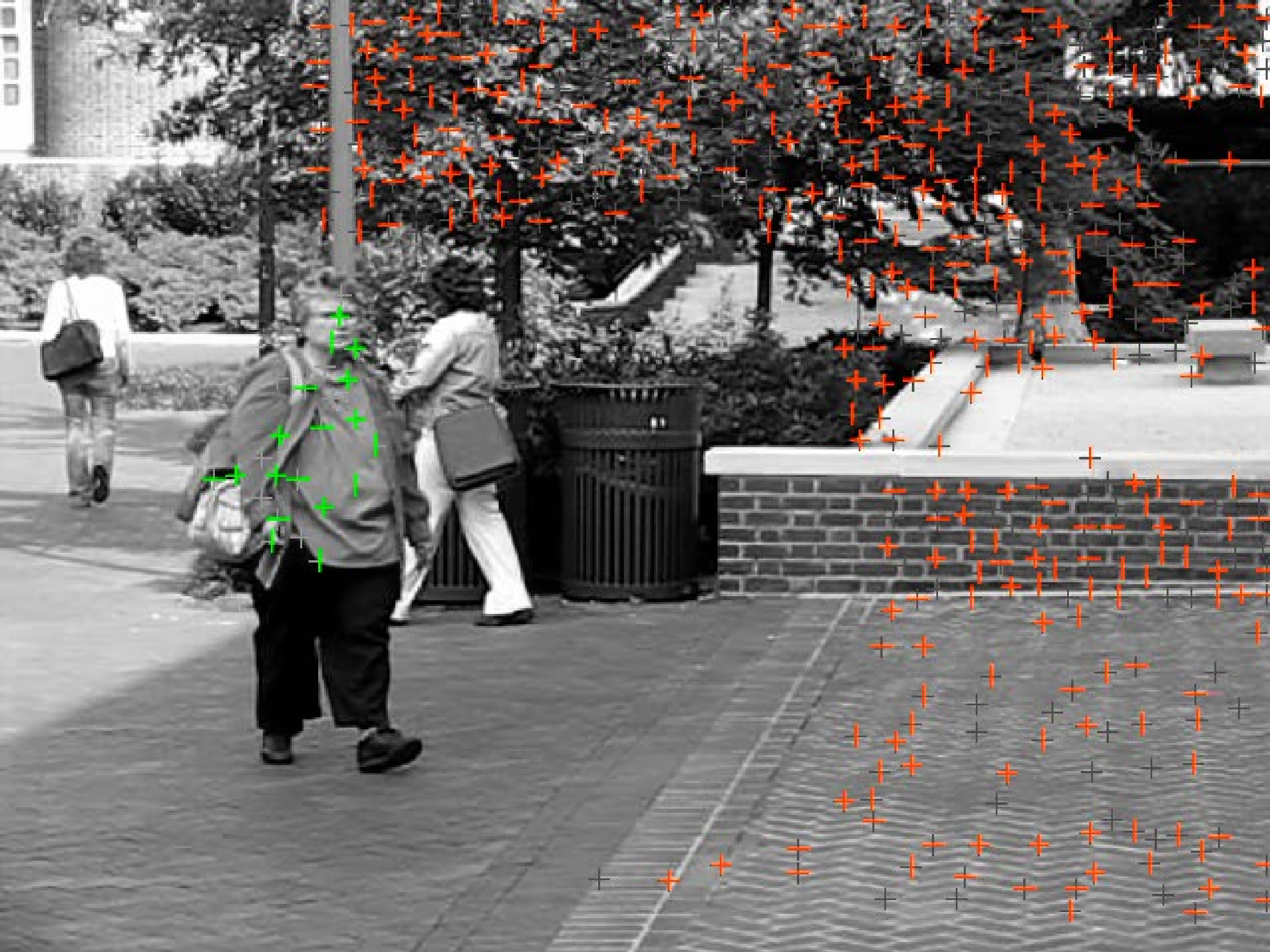}
\includegraphics[width=0.24\textwidth]{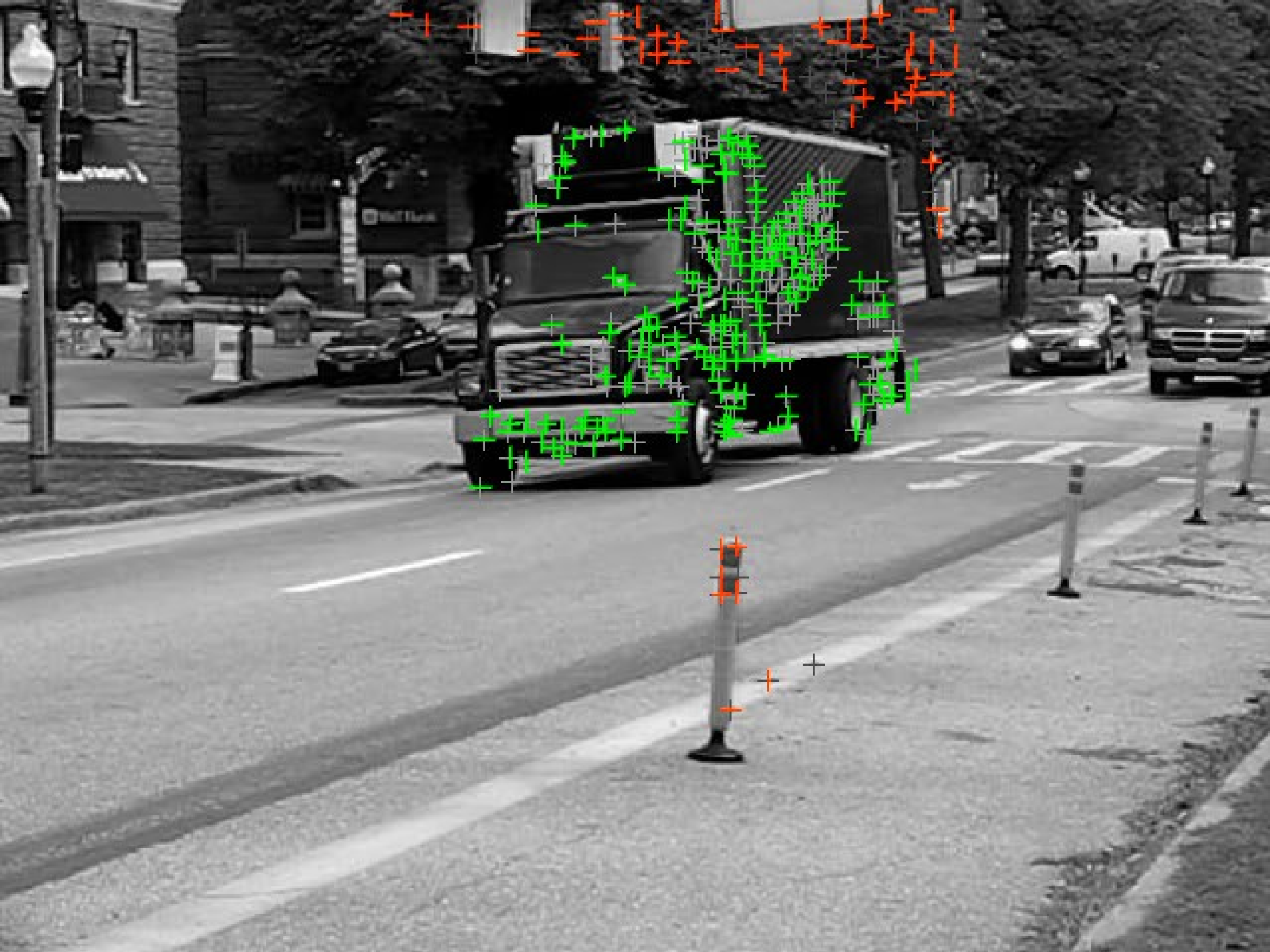}
\caption{Example frames from four video sequences of the Hopkins 155 Database with traced feature points.}
\label{fig:h155example} %% label for entire figure
\end{figure*}
\noindent Motion segmentation is to segment the trajectories associated with $n$ different moving objects into different groups according to their motions in a video sequence.
Because different motions can be treated as different subspaces, we use the Hopkins 155 Dataset to validate SCLD. This dataset is slightly corrupted as shown in Figure 3. It consists of 155 sequences of two or three motions and 1 sequence of 5 motions; the latter is regarded as outlier. Each sequence is regarded as a separate clustering problem. 
\begin{table}[ht]
\caption{Segmentation error rate (\%) on the HopKins 155 Dataset (155 Sequences).}
%\centering
\begin{tabular}{ c c c c c c}
\noalign{\smallskip}\hline
Method & LRR&SSC&LSA&LRSC& SCLD  \\
\noalign{\smallskip}\hline\noalign{\smallskip}
2 Motions&&&&&\\
 Mean&2.13&1.52&4.23&3.69&\textbf{1.31}  \\
Median&\textbf{0.00}&\textbf{0.00}&0.56&0.29&\textbf{0.00}  \\
\noalign{\smallskip}\hline\noalign{\smallskip}
3 Motions &&&&&\\
Mean&4.03&4.40&7.02&7.69&\textbf{3.43} \\
Median&1.43&\textbf{0.56}&1.45&3.80&\textbf{0.56}  \\
\noalign{\smallskip}\hline\noalign{\smallskip}
All &&&&&\\
Mean&2.56&2.18&4.86&4.59&\textbf{1.79} \\
Median&\textbf{0.00}&\textbf{0.00}&0.89&0.60&\textbf{0.00}  \\
\noalign{\smallskip}\hline
Time (sec)&1.30&1.04&3.40&\textbf{0.16}&1.49\\
\noalign{\smallskip}\hline
\end{tabular}
\label{table:motion}
\end{table}

\begin{figure}[!htbp]
\centering
\includegraphics[width=.5\textwidth]{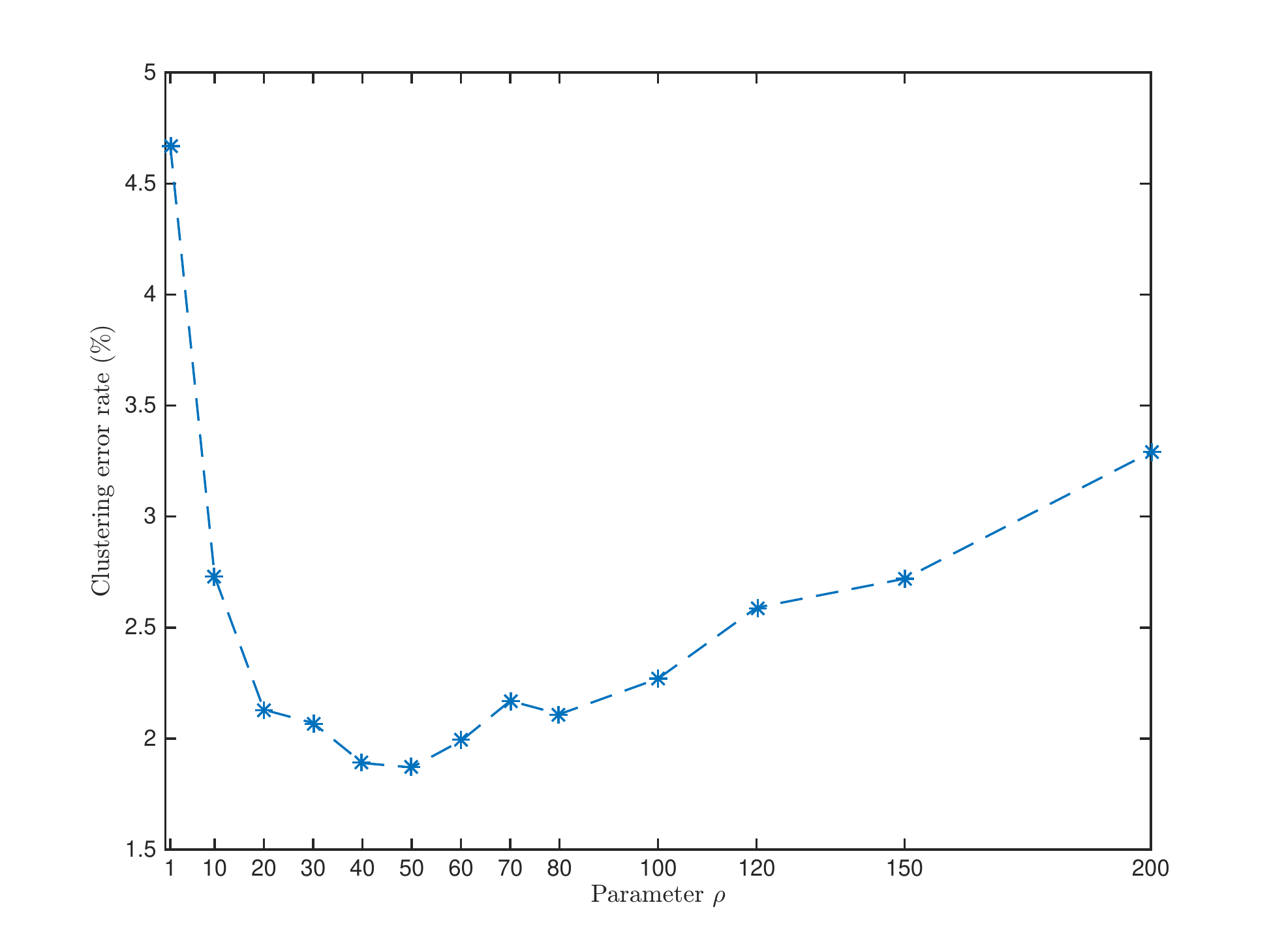}
\caption{Changes in clustering error rate when varying $\rho$.}
\label{fig_parameterinfluence}
\end{figure}
The experimental results are reported in Table \ref{table:motion}. We also used the results in Table 1 of \cite{elhamifar2013sparse}. It can be seen that SCLD produces superior results compared to the other methods. For all 155 sequences, the error rate is as low as 1.79$\%$. If we use all 156 sequences, the overall error rate of our proposed algorithm will be 1.87$\%$. We report the average computation time for every sequence at the bottom of Table \ref{table:motion}. The computational cost of LRSC is much lower than the other methods, while LRR, SSC and SCLD are comparable. 

To testify the influence of parameter $\rho$ in our algorithm, we show the clustering error rates of SCLD for different $\rho$ over all 155 sequences in Figure \ref{fig_parameterinfluence}. As we can see, when $\rho$ was between 1 and 200, the clustering error varied between 1.79$\%$ and 4.67$\%$. This implies that SCLD performs well under a wide range of values of $\rho$. 

To test the dependence of SCLD on initialization, we apply another two different initializations. First, we use the solutions from LRR as initial guess for SCLD. Second, we just generate some random numbers. We find that we can still get the same results. Actually, it is recommended to use convex relaxation solutions as initialization for nonconvex formulations \cite{fan2014strong,zhang2010analysis}.

\section{Conclusion}
In this paper we propose to use a log-determinant function (LogDet) as a rank approximation to recover the low-rank representation of high-dimensional data. When applied to subspace clustering, the proposed algorithm, called SCLD, exploits both global and local structures of the data through the LogDet rank approximation and angle-based affinity matrix. Consequently, it captures more intrinsic information of the data that benefits subspace clustering. Our extensive experimental results show that it outperforms other low-rank representation algorithms based on the nuclear norm. Therefore LogDet appears to be an effective rank approximation function well suited to subspace clustering applications. Although our model is simple and with no explicit modeling of outliers, it is resilient to various corruptions. Our future research will consider modeling corruptions explicitly. 

\label{conclusion}

\begin{acknowledgements}

This work is supported in part  by US National Science Foundation grants IIS 1218712.
\end{acknowledgements}
%\begin{thebibliography}{10}
%\end{thebibliography}
%
\bibliographystyle{spphys}
%\bibliography{scref}
\small{\bibliography{scref} } 
\end{document}